\documentclass[accepted]{tpm2026} 
                        
\usepackage[american]{babel}

\usepackage{natbib} 
\usepackage{mathtools} 
\usepackage{booktabs} 
\usepackage{tikz} 
\usepackage{amssymb} 
\usepackage{booktabs}
\usepackage{multirow}
\usepackage{amsmath}
\usepackage{mathtools}
\usepackage{amsthm}

\theoremstyle{definition}
\newtheorem{defn}{Definition}
\theoremstyle{plain}

\newtheorem{prop}{Proposition}

\usepackage{algorithm}
\usepackage{algpseudocode}

\newcommand{\circDist}{\textsf{CW}}
\newcommand{\wassDist}{\textsf{W}}

\newcommand{\ch}{\textsf{ch}}
\newcommand{\scope}{\textsf{sc}}

\usepackage{xspace}
\usepackage{dsfont}
\newcommand{\rvars}[1]{\ensuremath{\mathbf{#1}}\xspace}
\newcommand{\X}{\rvars{X}}

\newcommand{\jstate}[1]{\ensuremath{\mathbf{#1}}\xspace}
\newcommand{\x}{\jstate{x}}
\newcommand{\y}{\jstate{y}}

\DeclareMathOperator{\Ex}{\mathbb{E}}

\newcommand{\reprop}[2]{\newtheorem*{#1}{Proposition~\ref{#1}}
\begin{#1}#2\end{#1}}

\newcommand{\method}{\textsc{PeTeR}\xspace}

\title{{\method}: Post-Training Robustification of Probabilistic Circuits}

\author[1]{\href{mailto:<acioting@asu.edu>?Subject=Your TPM 2026 paper}{Adrian Ciotinga}{}}
\author[1]{Yeming Dai}
\author[1]{YooJung Choi}
\affil[1]{%
    School of Computing and Augmented Intelligence\\
    Arizona State University University
}
  
\begin{document}
\maketitle

\begin{abstract}
    Probabilistic circuits (PCs) can model complex joint distributions while supporting exact and efficient computation of many inference queries. However, standard likelihood-based PC learning is vulnerable to overfitting and fragile generalization when confronted with data noise, small sample sizes, or distribution shifts. This can be mitigated using distributionally-robust optimization which consider worst-case distributions within a Wasserstein ball of the empirical distribution, but current methods are limited to training a model from scratch in this framework. Instead, we propose \method: a novel, data-free post-training framework designed to robustify pre-trained PCs against distribution shifts without retraining from scratch. Empirical evaluations across multiple density estimation benchmarks demonstrate that \method effectively robustifies baseline models against both random and adversarial perturbations, achieving competitive or superior performance to data-dependent robust learning baselines. 
\end{abstract}

\section{Introduction}

Distributionally robust optimization (DRO) has emerged as a principled framework for learning estimators that remain reliable under data noise, limited sample sizes, and distribution shift, by hedging against an uncertainty set of plausible distributions rather than committing to the empirical distribution alone \citep{delage2010distributionally,bertsimas2019robust}. A common approach is to define this uncertainty set as a Wasserstein ball around the empirical distribution, yielding a minimax formulation in which the estimator is optimized against the worst-case distribution, according to the appropriate performance measure, within some radius $\epsilon$. 
Although widely studied in the context of deep generative models \citep{10.1109/TKDE.2022.3224056, kuhn2019wasserstein, bauso2017distributionally}, its application to tractable probabilistic models (TPMs) remains scarce; to the best of our knowledge, \cite{pmlr-v180-peddi22a} is the sole work addressing this intersection.
While effective, existing approaches to robust learning typically focus on training a model from scratch on a dataset, tying robustness to the availability of training data and requiring that any existing models be completely replaced rather than improved.

We argue that TPMs, and probabilistic circuits (PCs) in particular, are uniquely positioned for an alternative, more flexible framework of \textit{post-training robustification}. 
Rather than learning a new model from scratch or fine-tuning it with additional data as classic DRO settings, post-training robustification takes a pre-trained model and grants it the same distributionally-robust guarantees within an $\epsilon$-Wasserstein ball \textit{without the use of any training data or complete retraining of the model.}
As we show in this paper, circuit structural properties enabling tractable inference allow us to not only compute the distance between two distributions, but also to explicitly represent and optimize against the worst-case adversarial distribution efficiently. 
Based on this insight, we propose \method, a post-training robustification framework for PCs that hedges against worst-case perturbations within a Wasserstein ball, bounded using the recently introduced Circuit-Wasserstein distance \citep{ciotinga2025optimal}. Our method reduces the optimization problem to an unconstrained problem and solves it via the gradient ascent-descent algorithm, leveraging tractable gradient computation for PCs.
We empirically demonstrate the efficacy of our approach in comparison to an existing robust MLE baseline for PCs that requires a training dataset, as---to the best of our knowledge---no existing methods are fully data-free.

\section{Preliminaries}

We use capital letters ($X$) to denote random variables and lowercase letters ($x$) to denote their assignments. Boldface denotes a set of random variables and their assignments respectively (e.g., $\X$ and $\x$).

\paragraph{Probabilistic Circuits}
%
    A probabilistic circuit (PC) $C$ over a set of 
    variables $\X$ is a parameterized, rooted directed acyclic graph (DAG) with three types of nodes: sum, product, and input nodes. 
    Each input node $n$ is associated with function $f_n$ that encodes a probability distribution over a variable $X_i \in \X$, also called its \emph{scope} $\scope(n)$. The set of child nodes for an internal node (sum or product) $n$ is denoted $\ch(n)$, and its scope is given by $\scope(n)=\bigcup_{c \in \ch(n)}\scope(c)$.
    Each sum node $n$ has normalized parameters $\theta_{n,c}$ for each child node $c$.
    For an assignment $\x$, let $\x_n$ denote its projection onto the scope $\scope(n)$ of node $n$. Then a PC rooted at node $n$ recursively defines a probability distribution $p_n(\x)$ as follows: 
    \begin{align*}
        p_n(\x) =\begin{cases}
        f_n(\x) & \text{if $n$ is an input node,} \\
      \prod_{c \in \ch(n)} p_c(\x_c) & \text{if $n$ is a product node,} \\
      \sum_{c \in \ch(n)}\theta_{n,c} p_c(\x_c) & \text{if $n$ is a sum node.}
    \end{cases}
    \end{align*}
%
Probabilistic circuits admit exact and efficient computation of many probabilistic inference queries, enabled by enforcing certain structural constraints. 
In particular, throughout this paper we assume two properties, \textit{smoothness} and \textit{decomposability}, which enable tractable (polytime) computation of marginal and conditional queries. We further state when PCs are \textit{structured-decomposable} or \textit{compatible}; see Appendix~\ref{appendix:pcdefs} for the formal definitions of these properties.

\paragraph{Robust Maximum-Likelihood Estimation}

In many machine learning paradigms, one assumes access to samples drawn from an underlying but unknown distribution $\mathcal{P}(\X)$. In the limit, the empirical distribution $\hat{P}(\X)$ approximates $\mathcal{P}(\X)$; however, assuming that $\hat{P} \approx \mathcal{P}$ is not always principled. If the observed data is contaminated by noise, limited to a small sample size, or subject to distribution shifts, optimizing directly over the empirical distribution can lead to severe overfitting and fragile generalization. To mitigate this, one can adopt a distributionally robust optimization framework~\citep{delage2010distributionally,bertsimas2019robust} that considers an uncertainty set of candidate distributions around $\hat{P}$. 
A popular choice to define this neighborhood is the Wasserstein distance, which safeguards the learned estimator $P$ by optimizing against the worst-case distribution $Q$ within an $\epsilon$-Wasserstein ball around $\hat{P}$, yielding the following minimax formulation:
\begin{align*}
    \max_{P} \min_Q & \Ex_Q [\log P(\X)] \\
    s.t.\, & \wassDist(\hat{P}, Q) \leq \epsilon
\end{align*}
\paragraph{The Wasserstein Distance}
Let $P$ and $Q$ be probability measures on metric space $\mathbb{R}^n$ with distance metric $d(\cdot, \cdot)$.
The \emph{$d$-Wasserstein distance} between $P$ and $Q$ is defined as:
\begin{align*}
    \wassDist_d(P,Q) = \inf_{\gamma \in \Gamma(P,Q)} \Ex_{\gamma(\x,\y)}[d(\x,\y)] 
\end{align*}
where $\Gamma(P,Q)$ denotes the set of all \emph{couplings}, i.e.\ joint distributions whose marginal distributions match $P$ and $Q$. That is, the following holds for all $\gamma \in \Gamma(P,Q)$:
\begin{align*}
    P(\x) = \int_{\mathbb{R}^n} \gamma(\x,\y) d\y, \quad Q(\y) = \int_{\mathbb{R}^n} \gamma(\x,\y) d\x. 
\end{align*}
Intuitively, guaranteeing the performance of estimator $P$ against distributions within Wasserstein distance $\epsilon$ of $\hat{P}$ also guarantees the performance of $P$ against similarly bounded corruptions, adversarial perturbations, or sampling noise.

\section{Robust Post-Training}

We introduce \emph{robust post-training}, a modification to robust MLE, which aims to take a pre-trained generative model encoding a baseline distribution and update it to be robust to distribution shifts. Specifically, we guarantee strong performance across an uncertainty set containing all distributions within a Wasserstein distance of $\epsilon$ around the baseline distribution. Similar to robust MLE, this also safeguards our model against corrupted data or adversarial perturbations of bounded magnitude. 
Note that the framework does not assume any access to data---including the model's original training data, making the task generally harder than robust MLE but also more flexible and widely applicable.

To solve this for PCs, we propose \method: a Post-Training Robustification framework that takes a pre-trained PC and updates its parameters to be robust to distribution shifts within a Wasserstein ball.
Our approach is enabled by tractability of probabilistic circuits. Specifically, while computing the Wasserstein distance is \#P-hard even for product distributions~\citep{taşkesen2022discreteoptimaltransportindependent}, the recently introduced Circuit-Wasserstein distance $\circDist$~\citep{ciotinga2025optimal} can be computed exactly and efficiently for structured-decomposable PCs. 
We introduce a differentiable approach to computing $\circDist$, which we use to formulate the constrained optimization problem within an $\epsilon$-$\circDist$ ball as an unconstrained gradient descent-ascent problem.

\subsection{The Optimization Problem}

Given a pre-trained PC $\hat{P}$, our objective is to learn a robust circuit $P_\theta$ parameterized by $\theta$ that has high (log)-likelihood w.r.t.\ any distribution within an $\epsilon$-$\circDist$ ball around $\hat{P}$:
%
\begin{align}
    \max_{P_\theta} \min_{Q_\phi} & \Ex_{Q_\phi} [\log P_\theta(\X)]  \label{eq:cw-dro} \\
    \text{s.t.\;} & \circDist(\hat{P}, {Q_\phi}) \leq \epsilon \nonumber
\end{align}
The Circuit-Wasserstein distance is a natural choice of a bounding metric because of its efficient and exact algorithm when applied to compatible circuits. Note that $\hat{P}$ being structured-decomposable implies that both $P_\theta$ and $Q_\phi$ are compatible with it, as all three circuits share the same structure.\footnote{We note that same structure is sufficient, not necessary, for efficient $\circDist$ computation; see \cite{ciotinga2025optimal} for details.} $\circDist$ acts as an upper-bound on the true Wasserstein distance and allows us to explicitly parameterize the worst-case distribution $Q_\phi(\X)$ using a PC.

%

\subsection{Gradient-based Robust Post-Training}

To solve the constrained optimization problem for robust post-training, we first rewrite it as an unconstrained problem.
\begin{prop}\label{prop:feasibility}
    The optimal solution to the following unconstrained Lagrangian optimization problem is a feasible and optimal solution to the constrained problem in Equation~\ref{eq:cw-dro}.
    \begin{align}
        \max_{P_\theta} \min_{Q_\phi} \max_{\lambda \geq 0} \, \Ex_{Q_\phi} [\log P_\theta(\X)] +\lambda(\circDist(\hat{P}, {Q_\phi})-\epsilon). \nonumber
    \end{align}
\end{prop}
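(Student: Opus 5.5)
The plan is the standard penalty/indicator argument. We show that the inner maximization over $\lambda$ turns the Lagrangian into an exact barrier enforcing the constraint. Then the outer min–max of the unconstrained problem coincides with the constrained problem in Equation~\ref{eq:cw-dro}.

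\textbf{Step 1: evaluate the inner maximum over $\lambda$.} Fix $P_\theta$ and $Q_\phi$, and write $g(\phi)=\circDist(\hat{P},Q_\phi)-\epsilon$. The objective is affine in $\lambda$ with slope $g(\phi)$, so
\begin{align*}
\max_{\lambda\ge 0}\Ex_{Q_\phi}[\log P_\theta(\X)]+\lambda g(\phi)=
\begin{cases}
\Ex_{Q_\phi}[\log P_\theta(\X)] & \text{if } g(\phi)\le 0,\\
+\infty & \text{if } g(\phi)>0.
\end{cases}
\end{align*}
When $g(\phi)\le 0$ the maximizer is $\lambda=0$. When $g(\phi)<0$ every positive $\lambda$ strictly lowers the value, and when $g(\phi)=0$ the value does not depend on $\lambda$. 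Either way the value is exactly the expected log-likelihood.

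\textbf{Step 2: show the minimum over $Q_\phi$ is attained at a feasible point.} The minimizer over $Q_\phi$ never selects an infeasible $Q_\phi$ as long as some feasible one has finite value. Feasible points exist: $Q_\phi=\hat{P}$ itself, since the three circuits share structure and $\circDist(\hat{P},\hat{P})=0\le\epsilon$. Its value $\Ex_{\hat{P}}[\log P_\theta]$ is finite, or at least $<+\infty$, because log-likelihoods are bounded above for normalized discrete PCs. Hence
\[
\min_{Q_\phi}\max_{\lambda\ge0}(\cdots)=\min_{Q_\phi:\,\circDist(\hat{P},Q_\phi)\le\epsilon}\Ex_{Q_\phi}[\log P_\theta(\X)],
\]
and any minimizer $Q_\phi^*$ of the left side satisfies the constraint, which gives feasibility.

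\textbf{Step 3: take the outer maximum over $P_\theta$.} For every $P_\theta$ the function being maximized is identical in both problems, so their optimal values and maximizers $P_\theta^*$ coincide. The pair $(P_\theta^*,Q_\phi^*)$ is therefore feasible and optimal for Equation~\ref{eq:cw-dro}.

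\textbf{Main obstacle.} The delicate point is existence and attainment of the minima and maxima, and the handling of $+\infty$ values. To justify writing min/max, I would first note that the parameter spaces are products of simplices (for discrete leaves), hence compact. I would then argue that $\circDist(\hat{P},Q_\phi)$ is continuous in $\phi$, being computed by a recursive min-cost program over continuous parameters, so the feasible set is closed. Finally, $\Ex_{Q_\phi}[\log P_\theta]$ is continuous where finite, and otherwise lower semicontinuous with values in $[-\infty,0]$. If attainment fails, I would restate the result with inf/sup and the argument goes through unchanged. I would also remark that the proof needs no strong duality, since $\lambda$ is maximized innermost, so the min–max order is never swapped.
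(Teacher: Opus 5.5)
Your proof is correct and follows the paper's argument: the inner maximization over $\lambda\ge 0$ sends every $Q_\phi$ with $\circDist(\hat{P},Q_\phi)>\epsilon$ to $+\infty$, while $Q_\phi=\hat{P}$ is feasible with value $<+\infty$, so any minimizing $Q_\phi$ must lie in the $\epsilon$-$\circDist$ ball. Your Step 3, which notes that the outer problems coincide for every $P_\theta$, spells out the optimality half of the claim that the paper's proof leaves implicit. One slip in your side remark: when $P_\theta$ has zeros, $\phi\mapsto\Ex_{Q_\phi}[\log P_\theta(\X)]$ is upper, not lower, semicontinuous, but the minimum over the compact feasible set is still attained (split on whether some feasible point has value $-\infty$), and your inf/sup fallback makes this immaterial anyway.
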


Saddle-point optimization problems arise frequently in machine learning---e.g., GAN training, multi-agent reinforcement learning, etc. \citep{gans, marl-book}---and can be solved using the gradient ascent-descent algorithm. Our problem admits the following update rules with learning rates $\eta_\theta,\eta_\phi,\eta_\lambda$:
\begin{align*}
    \theta & \gets \theta + \eta_\theta \nabla_\theta \Ex_{Q_\phi} [\log P_\theta(\X)] \\
    \phi & \gets \phi - \eta_\phi [\nabla_\phi \Ex_{Q_\phi} [\log P_\theta(\X)] + \lambda \nabla_\phi \circDist(\hat{P}, Q_\phi)] \\
    \lambda & \gets \max(0, \lambda + \eta_\lambda[\circDist(\hat{P}, Q_\phi) - \epsilon])
\end{align*}
If $P_\theta$ and $Q_\phi$ are compatible and deterministic, the necessary gradients can be computed exactly and efficiently in a single forward and backward pass; however, without determinism, computing $\Ex_{Q_\phi} [\log P_\theta(\X)]$ is \#P-hard~\citep{vergari2021atlas}. To circumvent this, we estimate $\Ex_{Q_\phi} [\log P_\theta(\X)]$ by sampling from $Q_\phi$ when computing gradients w.r.t.\ $\theta$, and apply Jensen's inequality to bound $\Ex_{Q_\phi} [\log P_\theta(\X)] \leq \log \Ex_{Q_\phi} [ P_\theta(\X)]$ and update $\phi$ to instead minimize this upper-bound---which we can compute exactly thanks to the compatibility of $P_\theta$ and $Q_\phi$. Algorithm~\ref{algimp:couple-cw} shows the pseudocode of our post-training method.

\begin{algorithm}[t]
   \caption{$\method(\hat{P}, \epsilon)$: {\small returns parameters $\theta$ that make $\hat{P}$ robust to distributions within $\circDist$-ball of radius $\epsilon$}}\label{algimp:couple-cw}

\begin{algorithmic}[1]
    \State $P_\theta, Q_\phi \gets \hat{P}$ \Comment{Initialize $P_\theta,Q_\phi$}
   \Repeat
        \State $\x_1,\x_2,...,\x_n \overset{\text{i.i.d.}}{\sim} Q_\phi(\X)$ \Comment{Sample from $Q_\phi$}
        \State $\theta \gets \theta + \eta_\theta \nabla_\theta (\frac{1}{n} \sum_i \log P_\theta(\x_i)])$ \Comment{Estimate}
        \For{$k$ \textit{steps}}
            \State $\phi\!\gets\!\phi - \eta_\phi \nabla_\phi [\log \Ex_{Q_\phi} [ P_\theta(\X)]\!+\!\lambda \circDist(\hat{P}, Q_\phi)]$
            \State $\lambda\!\gets\!\max(0, \lambda + \eta_\lambda[\circDist(\hat{P}, Q_\phi) - \epsilon])$
        \EndFor
    \Until{\textit{convergence;}}
    
\end{algorithmic}
\end{algorithm}

\subsection{Differentiable Computation of $\circDist$}\label{sec:diffcw}

While \cite{ciotinga2025optimal} introduced the recursive algorithm for $\circDist$, computing its gradients is nontrivial because the standard forward pass relies on linear program solvers, which are not inherently compatible with backpropagation. To address this, we propose a differentiable framework for $\circDist$ that allows for computing gradients $\nabla_\theta \circDist(P_\theta, Q_\phi)$ and $\nabla_\phi \circDist(P_\theta, Q_\phi)$.

$\circDist$ is computed recursively by traversing circuits $P_\theta$ and $Q_\phi$, where the nature of the computation---and thus the gradient flow---at each step is dependent on the node type. Letting $n$ and $m$ be corresponding nodes in $P_\theta$ and $Q_\phi$ respectively, we have three cases:
\begin{itemize}
    \item \textbf{Sum Nodes:} The distance is formulated as an optimal transport problem between the children of $n$ and $m$, subject to the constraints imposed by the mixture weights $\theta$ and $\phi$. By applying the Envelope Theorem to the dual formulation of this linear program, we can compute the partial derivatives of the distance with respect to both the children’s distances and the mixture weights: $\frac{\partial \circDist(n,m)}{\partial \theta_i}=x^*_i$ and $\frac{\partial \circDist(n,m)}{\partial \phi_j}=y^*_j$.
    \item \textbf{Product Nodes:} The distance is defined as the unweighted sum of distances between corresponding children, leading to an identity gradient.
    \item \textbf{Input Nodes:} We assume the distance between base distributions is differentiable, allowing for efficient computation of $\nabla_\theta\wassDist(n,m)$ and $\nabla_\phi\wassDist(n,m)$. This holds for standard families such as Gaussian or categorical distributions.
\end{itemize}
A rigorous derivation of above can be found in Appendix~\ref{appendix:diffcw}.

\begin{table*}[t]
\centering
\scalebox{0.9}{
\setlength{\tabcolsep}{5pt}
\begin{tabular}{ll ccc ccc ccc}
\toprule
 &  & \multicolumn{3}{c}{$\mathcal{T}$} & \multicolumn{3}{c}{$\mathcal{T}_a$} & \multicolumn{3}{c}{$\mathcal{T}_r$} \\
\cmidrule(r){3-5} \cmidrule(lr){6-8} \cmidrule(l){9-11}
Dataset & $h$ & MLE-PC & RL-TPM & \textbf{\method} & MLE-PC & RL-TPM & \textbf{\method} & MLE-PC & RL-TPM & \textbf{\method} \\
\midrule
\multirow{3}{*}{NLTCS} & 1 & \multirow{3}{*}{\textbf{-6.09}} & -9.36 & \underline{-6.79} & -11.26 & \underline{-10.91} & \textbf{-9.31} & \underline{-8.41} {$\scriptstyle \pm 0.03$} & -10.13 {$\scriptstyle \pm 0.03$} & \textbf{-8.04} {$\scriptstyle \pm 0.02$} \\
 & 3 &  & -10.34 & \underline{-7.79} & -18.43 & \textbf{-11.14} & \underline{-11.56} & -11.46 {$\scriptstyle \pm 0.05$} & \underline{-11.22} {$\scriptstyle \pm 0.02$} & \textbf{-9.79} {$\scriptstyle \pm 0.02$} \\
 & 5 &  & -10.86 & \underline{-9.90} & -23.02 & \textbf{-10.61} & \underline{-11.75} & -13.24 {$\scriptstyle \pm 0.09$} & \underline{-11.44} {$\scriptstyle \pm 0.02$} & \textbf{-10.76} {$\scriptstyle \pm 0.01$} \\
\midrule
\multirow{3}{*}{MSNBC} & 1 & \multirow{3}{*}{\textbf{-6.43}} & -9.14 & \underline{-7.05} & -12.01 & \underline{-10.33} & \textbf{-8.45} & \underline{-8.31} {$\scriptstyle \pm 0.01$} & -9.88 {$\scriptstyle \pm 0.01$} & \textbf{-8.13} {$\scriptstyle \pm 0.00$} \\
 & 3 &  & \underline{-8.47} & -8.69 & -19.88 & \underline{-11.52} & \textbf{-10.96} & -11.29 {$\scriptstyle \pm 0.01$} & \underline{-10.17} {$\scriptstyle \pm 0.01$} & \textbf{-9.99} {$\scriptstyle \pm 0.00$} \\
 & 5 &  & \underline{-9.92} & -10.76 & -25.39 & \textbf{-11.85} & \underline{-11.95} & -13.52 {$\scriptstyle \pm 0.02$} & \textbf{-10.97} {$\scriptstyle \pm 0.00$} & \underline{-11.29} {$\scriptstyle \pm 0.00$} \\
\midrule
\multirow{3}{*}{Plants} & 1 & \multirow{3}{*}{\textbf{-14.46}} & -21.01 & \underline{-14.98} & -23.39 & \underline{-22.59} & \textbf{-19.62} & \underline{-18.78} {$\scriptstyle \pm 0.03$} & -23.65 {$\scriptstyle \pm 0.04$} & \textbf{-18.03} {$\scriptstyle \pm 0.02$} \\
 & 3 &  & -22.30 & \underline{-16.16} & -38.81 & \textbf{-23.22} & \underline{-26.69} & \underline{-26.26} {$\scriptstyle \pm 0.09$} & -28.76 {$\scriptstyle \pm 0.05$} & \textbf{-23.31} {$\scriptstyle \pm 0.05$} \\
 & 5 &  & -23.64 & \underline{-17.23} & -52.40 & \textbf{-27.54} & \underline{-33.30} & \underline{-32.36} {$\scriptstyle \pm 0.11$} & -32.63 {$\scriptstyle \pm 0.04$} & \textbf{-27.49} {$\scriptstyle \pm 0.07$} \\
\midrule
\multirow{3}{*}{Netflix} & 1 & \multirow{3}{*}{\textbf{-57.59}} & -58.82 & \underline{-57.66} & -61.13 & \underline{-60.67} & \textbf{-60.61} & \textbf{-58.27} {$\scriptstyle \pm 0.02$} & -59.37 {$\scriptstyle \pm 0.01$} & \underline{-58.32} {$\scriptstyle \pm 0.02$} \\
 & 3 &  & -59.27 & \underline{-58.09} & -67.12 & \textbf{-63.08} & \underline{-65.06} & \textbf{-59.56} {$\scriptstyle \pm 0.03$} & -60.55 {$\scriptstyle \pm 0.02$} & \underline{-59.69} {$\scriptstyle \pm 0.03$} \\
 & 5 &  & -60.10 & \underline{-59.21} & -72.17 & \textbf{-65.32} & \underline{-66.48} & \textbf{-60.78} {$\scriptstyle \pm 0.06$} & -61.88 {$\scriptstyle \pm 0.04$} & \underline{-61.28} {$\scriptstyle \pm 0.04$} \\
\midrule
\multirow{3}{*}{DNA} & 1 & \multirow{3}{*}{\textbf{-79.97}} & -82.69 & \underline{-81.05} & -87.68 & \underline{-86.67} & \textbf{-85.12} & \underline{-83.44} {$\scriptstyle \pm 0.09$} & -84.65 {$\scriptstyle \pm 0.05$} & \textbf{-83.42} {$\scriptstyle \pm 0.05$} \\
 & 3 &  & -84.31 & \underline{-82.62} & -102.88 & \underline{-93.38} & \textbf{-90.70} & -90.33 {$\scriptstyle \pm 0.15$} & \underline{-88.80} {$\scriptstyle \pm 0.05$} & \textbf{-88.07} {$\scriptstyle \pm 0.07$} \\
 & 5 &  & -85.85 & \underline{-84.23} & -117.82 & \underline{-98.24} & \textbf{-96.08} & -97.00 {$\scriptstyle \pm 0.23$} & \underline{-92.00} {$\scriptstyle \pm 0.08$} & \textbf{-91.18} {$\scriptstyle \pm 0.10$} \\
\midrule
\multirow{3}{*}{Movie} & 1 & \multirow{3}{*}{\textbf{-52.81}} & -55.02 & \underline{-54.07} & -61.93 & \underline{-61.21} & \textbf{-58.59} & \textbf{-58.53} {$\scriptstyle \pm 0.09$} & -59.77 {$\scriptstyle \pm 0.07$} & \underline{-58.99} {$\scriptstyle \pm 0.08$} \\
 & 3 &  & -57.06 & \underline{-53.38} & -79.51 & \underline{-69.61} & \textbf{-67.62} & -69.77 {$\scriptstyle \pm 0.20$} & \textbf{-68.82} {$\scriptstyle \pm 0.12$} & \underline{-68.91} {$\scriptstyle \pm 0.17$} \\
 & 5 &  & -59.34 & \underline{-56.09} & -96.75 & \textbf{-78.24} & \underline{-79.51} & -80.66 {$\scriptstyle \pm 0.16$} & \underline{-77.02} {$\scriptstyle \pm 0.11$} & \textbf{-75.07} {$\scriptstyle \pm 0.16$} \\
\midrule
\multirow{3}{*}{BBC} & 1 & \multirow{3}{*}{\textbf{-250.20}} & -250.72 & \underline{-250.30} & -258.72 & \textbf{-254.99} & \underline{-256.75} & \textbf{-253.04} {$\scriptstyle \pm 0.12$} & -253.43 {$\scriptstyle \pm 0.10$} & \underline{-253.10} {$\scriptstyle \pm 0.11$} \\
 & 3 &  & -251.45 & \underline{-250.55} & -275.07 & \textbf{-262.65} & \underline{-266.11} & \textbf{-258.74} {$\scriptstyle \pm 0.15$} & -259.25 {$\scriptstyle \pm 0.12$} & \underline{-258.82} {$\scriptstyle \pm 0.13$} \\
 & 5 &  & -252.52 & \underline{-250.58} & -290.24 & \textbf{-269.24} & \underline{-274.58} & \underline{-264.30} {$\scriptstyle \pm 0.30$} & -265.01 {$\scriptstyle \pm 0.26$} & \textbf{-264.16} {$\scriptstyle \pm 0.29$} \\
\bottomrule
\end{tabular}}
\caption{Test-set log-likelihood. $\epsilon \in \{1,3,5\}$: corruption budget for test-set generation. $\mathcal{T}$: original test set; $\mathcal{T}_a$: adversarially-perturbed test set; $\mathcal{T}_r$: average over 10 randomly-perturbed test sets. Boldface indicates best approach, underline second best.
}\label{tab:results}
\end{table*}

\section{Experiments}
In this section, we empirically evaluate \method on seven of the density estimation benchmark datasets \citep{lowd2010learning,van2012markov,bekker2015tractable,larochelle2011neural}, a set of binary datasets ranging from 16 to 1000 variables. We use Hidden Chow-Liu Tree (HCLT)~\citep{hclt} implemented with PyJuice~\citep{liu2024scalingtractableprobabilisticcircuits} to learn the PC structure for each dataset and estimate the parameters using \method and two other baselines for comparison.

\paragraph{Baselines} We compare against the robust MLE method (RL-TPM) proposed by \citet{pmlr-v180-peddi22a} and standard maximum-likelihood estimation via expectation-maximization (MLE-PC)~\citep{circuitem}. 
RL-TPM, designed for binary datasets, poses robust MLE as a minimax game between the PC estimator maximizing likelihood and an adversary corrupting bits with a fixed budget to minimize the estimator's likelihood. Its corruption budget $\epsilon$ corresponds to the maximum Hamming distance a data point may be moved, subsequently bounding the Wasserstein distance between the original and corrupted datasets by $\epsilon$. 
Lastly, we use MLE-PC as our base distribution $\hat{P}$.

\paragraph{Evaluation Sets} For each dataset, we construct three test sets to evaluate each method: (1) the original test set $\mathcal{T}$; (2) an adversarially-perturbed test set $\mathcal{T}_a$, where each data point receives $\epsilon$ corruptions applied greedily to maximize the drop in likelihood of the MLE-PC; and (3) $\mathcal{T}_r$ generated by randomly corrupting each data point $\epsilon$ times.

\paragraph{Results}
Table \ref{tab:results} summarizes the results. As we expect, the distributionally robust methods achieve slightly worse likelihood than MLE-PC given the unperturbed test set $\mathcal{T}$; nevertheless, we observe that \method consistently outperforms RL-TPM on $\mathcal{T}$. \method also consistently outperforms RL-TPM on the randomly-perturbed test set $\mathcal{T}_r$, which aims to capture an ``average-case'' corruption rather than a single worst case w.r.t.\ MLE-PC. Lastly, \method outperforms RL-TPM on the majority of the adversarially-perturbed datasets. We note that instances where \method is outperformed tend to be at higher corruption levels, likely due to the $\circDist$-distance being a looser upper bound on the $\wassDist$-distance than the Hamming distance used by RL-TPM is; this results in the true Wasserstein ball that \method optimizes for being smaller than the one RL-TPM does, so the true worst-case distribution---typically lying on the edge of the ball---may be outside of the $\epsilon$-$\circDist$ ball. Despite the strict $\epsilon$-$\circDist$ constraint, \method remains highly stable during training, as discussed further in Appendix~\ref{appendix:feas}.


\section{Conclusion}

We introduced \method, a data-free post-training framework that robustifies probabilistic circuits against distribution shifts within a Wasserstein ball without requiring access to any training data or retraining a circuit from scratch. By leveraging the Circuit-Wasserstein distance and deriving gradients for its computation, we formulated robust post-training within a Wasserstein ball as an unconstrained gradient descent-ascent problem that can be solved efficiently for structured-decomposable PCs. Empirically, \method consistently outperforms the data-dependent baseline across both unperturbed and perturbed test sets, particularly at lower corruption budgets, while maintaining stable training despite the non-smooth nature of Circuit-Wasserstein gradients. These results suggest that post-training robustification offers a practical and flexible alternative to training robust models from scratch and motivate further development of optimal transport-based losses for PCs. Future work includes extending \method to decomposable but non-structured PCs.

\bibliography{tpm2026-template}

\newpage

\onecolumn

\title{\method: Post-Training Robustification of Probabilistic Circuits\\(Supplementary Material)}
\maketitle

\appendix
\section{Probabilistic Circuits Background}\label{appendix:pcdefs}

This section formally establishes the core structural properties of probabilistic circuits that dictate their tractability for various probabilistic inference queries.

\begin{defn}
    A probabilistic circuit $C$ is \textbf{smooth} if every sum node $n \in C$ has the same scope as its children: $\forall n_i \in \ch(n)$, $\scope(n_i)=\scope(n)$.
\end{defn}

\begin{defn}
    A probabilistic circuit $C$ is \textbf{decomposable} if the children of every product node $n \in C$ have disjoint scope: $\forall\, n_i \neq n_j \in \ch(n)$, $\scope(n_i) \bigcap \scope(n_j) = \emptyset$.
\end{defn}

\begin{defn}
    A probabilistic circuit $C$ is \textbf{structured-decomposable} if it is decomposable and any two product nodes with the same scope partition their scope identically: $\forall\, n, m \in P$, $\scope(n) = \scope(m) \implies \{\scope(n_i) \mid n_i \in \ch(n)\} = \{\scope(m_i) \mid m_i \in \ch(m)\}$.
\end{defn}

\begin{defn}
    Two probabilistic circuits $C_1$ and $C_2$ are \textbf{compatible} if they are both structured-decomposable and any product node in $C_1$ partitions its scope identically to any product node in $C_2$ with the same scope: $\forall\, n \in {C_1}, m \in {C_2}$, $\scope(n) = \scope(m) \implies \{\scope(n_i) \mid n_i \in \ch(n)\} = \{\scope(m_i) \mid m_i \in \ch(m)\}$.

    Nodes $n$ and $m$ in compatible PCs $C_1$ and $C_2$ that share the same type (e.g., sum, product, or input) and same scope are called \textbf{corresponding} nodes.
\end{defn}

\section{Differentiable Computation of $\circDist$}\label{appendix:diffcw}

\cite{ciotinga2025optimal} detailed the recursive algorithm for computing the Circuit-Wasserstein distance, but the use of linear program solvers during the forward pass prevents the use of classic automatic differentiation techniques to compute gradients. Here, we derive the gradients for each step of the Circuit-Wasserstein algorithm. $\circDist$ looks at pairs of corresponding nodes $n$ and $m$ in $P_\theta$ and $Q_\phi$ respectively; let $n_i$ (ext. $m_j$) denote the $i$'th child of $n$ (ext. $m$). There are three cases:

\paragraph{$n$ and $m$ are Sum Nodes} 

Let $\theta_{i}$ and $\phi_j$ denote the weights of the $i$'th and $j$'th children of $n$ and $m$ respectively. The Circuit-Wasserstein distance between corresponding sum nodes is the solution to the discrete optimal transport problem where the distance between child $n_i$ and $m_j$ is their recursively-computed Circuit-Wasserstein distance. This is formulated as a linear programming problem:
\begin{align}\circDist(n,m) = \begin{cases} 
    \begin{array}{ll@{}ll}
    \displaystyle\min_w  & \sum_{i,j} \circDist(n_i,m_j)w_{i,j} &\\
    \text{s.t.}& \forall i,\, \sum_jw_{i,j}=\theta_i \\&\forall j,\, \sum_iw_{i,j}=\phi_j \\ & w_{i,j} \geq 0
    \end{array}\end{cases} 
\end{align}
With solution $w^*$ and applying the Envelope Theorem, we see that $\frac{\partial \circDist(n,m)}{\partial \circDist(n_i,m_j)}=w^*_{i,j}$. Now, denoting the dual variables for the $\theta_i$ and $\phi_j$ constraints as $x_i$ and $y_j$ respectively, the above linear programming problem has the equivalent dual form:
\begin{align}\circDist(n,m) = \begin{cases} 
    \begin{array}{ll@{}ll}
    \displaystyle\max_{x,y}  & \sum_{i}\theta_ix_i +\sum_j\phi_jy_j &\\
    \text{s.t.}& \forall i,j,\, x_i+y_j \leq \circDist(n_i,m_j)
    \end{array}\end{cases} 
\end{align}
Another application of the Envelope Theorem at the optimal point yields $\frac{\partial \circDist(n,m)}{\partial \theta_i}=x^*_i$ and $\frac{\partial \circDist(n,m)}{\partial \phi_j}=y^*_j$ respectively. Calculating $\nabla_\theta \circDist(n,m)$ and $\nabla_\phi \circDist(n,m)$ in this manner yields a \emph{sub-gradient}---as the solution to a linear programming problem is non-smooth at its optimal point---leading to the computed gradient vectors being non-smooth step functions. This would typically necessitate the use of e.g., a quadratic regularizer to smooth out the gradients, but we forego this for differentiating Circuit-Wasserstein due to the unique optimization landscape of the problem effectively nullifying this discontinuity.

\paragraph{$n$ and $m$ are Product Nodes} Since circuits $P_\theta$ and $Q_\phi$ are compatible, there is a bijective map between children $n_i$ and $m_i$ of $n$ and $m$ according to their scopes. The Circuit-Wasserstein distance between $n$ amd $m$ is simply an unweighted sum of the distances between corresponding children. Consequently, $\frac{\partial \circDist(n,m)}{\partial \circDist(n_i,m_i)}=1$.

\paragraph{$n$ and $m$ are Input Nodes} The Circuit-Wasserstein distance between input nodes is equal to the Wasserstein distance between their encoded distributions, which can be computed efficiently for many continuous and discrete families of distributions. By the assumption of tractability for input node distributions commonly made for probabilistic circuits, we assume that $\nabla_\theta \wassDist(n,m)$ and $\nabla_\phi \wassDist(n,m)$ can be computed efficiently. This assumption holds for many classes of distributions, included (but not limited to) categorical and Gaussian distributions.

\section{Feasibility and Stability of \method}\label{appendix:feas}
\reprop{prop:feasibility}{
   The optimal solution to the following unconstrained Lagrangian optimization problem is a feasible and optimal solution to the constrained problem in Equation~\ref{eq:cw-dro}.
    \begin{align}
        \max_{P_\theta} \min_{Q_\phi} \max_{\lambda \geq 0} \, \Ex_{Q_\phi} [\log P_\theta(\X)] +\lambda(\circDist(\hat{P}, {Q_\phi})-\epsilon). \label{eq:lagrangian}
    \end{align}}
\begin{proof}
    Let $\theta^*,\phi^*,\lambda^*$ denote an optimal solution to \ref{eq:lagrangian}. If $\circDist(\hat{P}, {Q_{\phi^*}}) > \epsilon$, then $\lambda\rightarrow \infty$ results in the innermost maximization going to positive infinity ($+\infty$). Since the outer optimization with respect to $Q_\phi$ seeks to \textit{minimize} this objective, it will strictly avoid any $Q_\phi$ that causes the inner maximum to diverge, provided there is at least one feasible solution (for instance, setting $Q_\phi = \hat{P}$ yields $\circDist(\hat{P}, \hat{P}) = 0 \leq \epsilon$, which results in a finite value). 

    Consequently, to prevent the objective from diverging to $+\infty$, the optimal distribution must satisfy $\circDist(\hat{P}, {Q_{\phi^*}}) \leq \epsilon$. This ensures that the optimal solution $Q_{\phi^*}$ strictly resides within the $\epsilon$-$\circDist$ ball. Thus, the optimal solution to the unconstrained Lagrangian optimization problem is guaranteed to be feasible with respect to the original constraint.
\end{proof}

\paragraph{Stability of $\circDist$ Gradients}\label{sec:discontinuity}
As noted in Section \ref{sec:diffcw}, the gradients computed for the linear programs in $\circDist$ computation are actually discontinuous sub-gradients that behave like step functions. This would typically make for highly instable training---manifesting as the optimizer's inability to maintain a feasible $\mathcal{Q_\phi}$---but we observed that training within a $\circDist$-ball is actually highly stable, with our adversarial distribution $Q_\phi$ maintaining negligible constraint violation with nearly zero hyperparameter tuning. We attribute this to the fact that a single sum node has multiple corresponding sum nodes in the other circuit. Thus, its parameter gradients are the sum of multiple sub-gradients with offset discontinuities, mitigating the instability.

\end{document}